\title{Convexity-Driven Projection for Point Cloud Dimensionality Reduction}
\titlerunning{Convexity-Driven Projection}
\author{Suman Sanyal\inst{1}}
\authorrunning{S. Sanyal}
\institute{Big Data Analytics, Goa Institute of Management, Goa, India\\
\email{sanyal@gim.ac.in}}
\begin{document}
\maketitle

\begin{abstract}
We propose Convexity-Driven Projection (CDP), a boundary-free linear method for dimensionality reduction of point clouds that targets preserving detour-induced local non-convexity. CDP builds a $k$-NN graph, identifies admissible pairs whose Euclidean-to-shortest-path ratios are below a threshold, and aggregates their normalized directions to form a positive semidefinite non-convexity structure matrix. The projection uses the top-$k$ eigenvectors of the structure matrix. We give two verifiable guarantees. A pairwise a-posteriori certificate that bounds the post-projection distortion for each admissible pair, and an average-case spectral bound that links expected captured direction energy to the spectrum of the structure matrix, yielding quantile statements for typical distortion. Our evaluation protocol reports fixed- and reselected-pairs detour errors and certificate quantiles, enabling practitioners to check guarantees on their data.

\keywords{Point cloud \and Dimensionality reduction \and Non-convexity \and Shortest path \and Spectral projection}
\end{abstract}

\section{Introduction}
\label{sec:intro}

Point clouds, sets of points in high-dimensional spaces, are central to 3D modelling, robotics, and visualisation~\cite{b8}. In many of these settings, obstacle- or curvature-induced detours are essential: the graph shortest path between two samples can be much longer than their Euclidean separation, signalling local non-convexity. Preserving this detour geometry matters for tasks such as path planning and shape analysis, yet standard dimensionality-reduction (DR) methods do not prioritise it. Variance-seeking linear methods (PCA) maximise explained variance and can collapse detours by ignoring geodesic structure~\cite{pca}. Nonlinear neighbour-embeddings such as t-SNE~\cite{tsne} and UMAP~\cite{umap} optimise probabilistic/fuzzy neighbourhood objectives; they are effective for visualisation but offer no explicit control of detour geometry and yield nonlinear maps sensitive to hyperparameters. Graph-based linear projections (LPP, NPP, OLPP)~\cite{heNiyogi2004lpp,he2005npe,cai2006olpp} minimise Laplacian-style smoothness to keep neighbours close, which does not single out directions that consistently witness non-convexity.

We propose \textbf{Convexity-Driven Projection} (CDP), a linear DR method designed to preserve detour-induced local non-convexity in point clouds without requiring boundary estimation. CDP constructs a \(k\)-nearest-neighbor (\(k\)-NN) graph, identifies admissible pairs \((i,j)\) with a convexity ratio \(r_{ij} = \|p_j - p_i\| / S(p_i, p_j) \le \tau\), where \(S(p_i, p_j)\) is the graph shortest-path distance, and forms a positive semidefinite (PSD) non-convexity structure matrix by averaging normalized outer products \(u_{ij} u_{ij}^\top\) weighted by \((1 - r_{ij})\). The projection is onto the top-\(k\) eigenvectors of this matrix, emphasizing directions associated with strong detours. CDP’s linear nature ensures computational simplicity, while its boundary-free approach makes it broadly applicable. The contributions of the paper are as follows. (i) A simple PSD construction that targets detour-induced structure; (ii) a pairwise a-posteriori certificate bounding $\tilde r_{ij}/r_{ij}$ using the projected shortest path;
(iii) an average-case spectral bound with quantile implications; and (iv) an evaluation protocol reporting fixed- and reselected-pairs metrics with certificate quantiles.

This paper is organized as follows. Section~\ref{sec:prelim} introduces key notations and the mathematical foundation of CDP. Section~\ref{sec:algo} details the CDP algorithm, including complexity and graph construction. Section~\ref{sec:theory} presents the theoretical guarantees with a verification protocol, including the pairwise certificate and spectral bound. Section~\ref{sec:results} evaluates CDP on synthetic and real-world datasets, including a toy example. Section~\ref{sec:conclusion} summarizes findings and future directions. Table~\ref{tab:notations} lists key notations used throughout the paper, and a comparison of CDP with existing methods is provided in Table~\ref{tab:comparison}.

\begin{table}[t]
\centering
\caption{Key notations and their descriptions used in the paper.}
\label{tab:notations}
\scalebox{0.9}{
\begin{tabular}{c|l}
\toprule
Notation & Description \\
\midrule
$p_i$ & Point \(i\) in the original space \(\mathbb{R}^d\), \(i=1,\dots,N\) \\
$G=(V,E)$ & Connected \(k\)-NN graph with vertices \(V\) and edges \(E\) \\
$w(u,v)$ & Euclidean edge weight for edge \((u,v) \in E\), \(\|p_v - p_u\|\) \\
$S(p_i,p_j)$ & Shortest-path distance between points \(p_i\) and \(p_j\) on \(G\) \\
$r_{ij}$ & Convexity (detour) ratio, \(\frac{\|p_j - p_i\|}{S(p_i,p_j)}\), Eq.~\eqref{eq:rij} \\
$\mathcal{D}^*$ & Set of admissible pairs \((i,j)\) with \(r_{ij} \le \tau\), \(\tau < 1\) \\
$\widehat{C}^{\mathrm{sp}}$ & Admissible non-convexity index, \(\frac{1}{|\mathcal{D}^*|} \sum_{(i,j) \in \mathcal{D}^*} r_{ij}\) \\
$u_{ij}$ & Normalized direction, \(\frac{p_j - p_i}{\|p_j - p_i\|}\) \\
$S^{\mathrm{nc}}$ & Non-convexity structure matrix, Eq.~\eqref{eq:Snc} \\
$\lambda_\ell$, $z_\ell$ & Eigenvalues and eigenvectors of \(S^{\mathrm{nc}}\), \(\lambda_1 \ge \dots \ge \lambda_d \ge 0\) \\
$V$ & Projection matrix, top-\(k\) eigenvectors \([z_1, \dots, z_k]\) \\
$p_i'$ & Projected point, \(V^\top p_i \in \mathbb{R}^k\) \\
$w'(u,v)$ & Projected edge weight, \(\|V^\top (p_v - p_u)\|\) \\
$\tilde{S}(p_i,p_j)$ & Shortest-path distance on projected graph \((V,E,w')\) \\
$\tilde{r}_{ij}$ & Post-projection convexity ratio, \(\frac{\|V^\top (p_j - p_i)\|}{\tilde{S}(p_i,p_j)}\), Eq.~\eqref{eq:tilderij} \\
$\psi_{ij}$ & Projected direction norm, \(\left\|V^\top \frac{p_j - p_i}{\|p_j - p_i\|}\right\|\), Eq.~\eqref{eq:psi_phi_star} \\
$\phi^\star_{ij}$ & Minimum edge projection norm on \(\widetilde{P}_{ij}\), Eq.~\eqref{eq:psi_phi_star} \\
$\phi_G$ & Minimum edge projection norm over all edges, \(\min_{e \in E(G)} \left\|V^\top \frac{e}{\|e\|}\right\|\) \\
$\mu_k$ & Average-case spectral capture, \(\frac{\sum_{\ell \le k} \lambda_\ell}{\sum_{\ell} \lambda_\ell}\), Eq.~\eqref{eq:avg_energy} \\
$Z$ & Squared norm of random projected direction, \(\|V^\top U\|^2\), Eq.~\eqref{eq:avg_energy} \\
$\widehat{C}^{\mathrm{sp}\prime}$ & Post-projection non-convexity index (fixed pairs), \(\frac{1}{|\mathcal{D}^*|} \sum_{(i,j) \in \mathcal{D}^*} \tilde{r}_{ij}\) \\
$\mathcal{D}^{*\prime}$ & Reselected admissible pairs, \(\{(i,j): \tilde{r}_{ij} \le \tau\}\) \\
$\widehat{C}^{\mathrm{sp}\prime\prime}$ & Post-projection non-convexity index (reselected pairs), \(\frac{1}{|\mathcal{D}^{*\prime}|} \sum_{(i,j) \in \mathcal{D}^{*\prime}} \tilde{r}_{ij}\) \\
\bottomrule
\end{tabular}
}
\end{table}

\begin{table}[t]
\centering
\caption{Comparison of dimensionality-reduction methods by primary optimization target, nature of theoretical guarantees, and linearity (linear or nonlinear).}
\label{tab:comparison}
\scalebox{0.75}{
\begin{tabular}{|l|l|l|l|}
\toprule
Method & Target & Guarantees & Linearity \\
\midrule
PCA \cite{b8} & Variance & Perturbation (classical) & Linear \\
LPP/NPP/OLPP \cite{heNiyogi2004lpp,he2005npe,cai2006olpp} & Laplacian smoothness & Spectral properties & Linear \\
Isomap \cite{isomap} & Geodesic distances & Isometry (idealized) & Nonlinear \\
t-SNE \cite{tsne} & Local KL divergence & None & Nonlinear \\
UMAP \cite{umap} & Fuzzy set objective & None & Nonlinear \\
CDP (ours) & Detour / non-convexity directions & Pairwise certificate; average-case spectral & Linear \\
\bottomrule
\end{tabular}
}
\end{table}

\section{Preliminaries}
\label{sec:prelim}

Let $\{p_i\}_{i=1}^N\subset\mathbb{R}^d$ and let $G=(V,E)$ be a connected $k$-NN graph with Euclidean edge weights. For a pair $(i,j)$ define the convexity (detour) ratio
\begin{equation}\label{eq:rij}
r_{ij} \;=\; \frac{\|p_j-p_i\|}{S(p_i,p_j)},\qquad 0<r_{ij}\le 1,
\end{equation}
where $S(p_i,p_j)$ is the shortest-path distance in $G$. A pair is admissible if $r_{ij}\le \tau$ with $\tau<1$, and we write $\mathcal D^*=\{(i,j): r_{ij}\le \tau\}$. We summarize detour prevalence by the admissible non-convexity index
\begin{equation}
\widehat C^{\mathrm{sp}} \;=\; \frac{1}{|\mathcal D^*|}\sum_{(i,j)\in\mathcal D^*} r_{ij}.
\end{equation}
Lower $\widehat C^{\mathrm{sp}}$ indicates stronger average detours among admissible pairs. For $u_{ij}:=(p_j-p_i)/\|p_j-p_i\|$, we define the non-convexity structure matrix as
\begin{equation}
S^{\mathrm{nc}}
\;:=\;
\frac{1}{|\mathcal D^*|}\sum_{(i,j)\in\mathcal D^*} (1-r_{ij})\,u_{ij}u_{ij}^\top.
\label{eq:Snc}
\end{equation}
Note that each summand in~\eqref{eq:Snc} is a rank-1 positive semidefinite projector scaled by $(1-r_{ij})\ge 0$, hence $S^{\mathrm{nc}}$ is positive semidefinite. Let its eigenvalues be $\lambda_1\ge\cdots\ge\lambda_d\ge 0$ with orthonormal eigenvectors $\{z_\ell\}$. CDP uses $V=[z_1,\dots,z_k]\in\mathbb{R}^{d\times k}$, the matrix of the top-$k$ eigenvectors of the non-convexity structure matrix $S^{\rm nc}$, and returns $p_i'=V^\top p_i$.

\section{Convexity-Driven Projection (CDP) Algorithm}
\label{sec:algo}

We present the Convexity-Driven Projection (CDP) algorithm, designed to reduce the dimensionality of point clouds while preserving detour-induced non-convexity. The algorithm constructs a $k$-nearest-neighbor ($k$-NN) graph, identifies admissible pairs based on a detour ratio threshold, builds a non-convexity structure matrix, and projects onto its top-$k$ eigenvectors. The steps are formalized in Algorithm~\ref{alg:cdp}, followed by discussions on computational complexity and graph construction considerations.

The CDP algorithm operates on a sparse $k$-NN graph with $m=\Theta(N k_{\text{nn}})$ edges. Computing all-pairs shortest-path distances (APSP) using multi-source Dijkstra’s algorithm incurs a cost of $\tilde O(N^2 k_{\text{nn}})$, which dominates for large $N$. Constructing the non-convexity structure matrix $S^{\mathrm{nc}}$ from $|\mathcal D^*|$ admissible pairs requires $\Theta(|\mathcal D^*| d^2)$ operations due to rank-1 outer product calculations. The randomized SVD to extract the top-$k$ eigenvectors of the $d \times d$ matrix $S^{\mathrm{nc}}$ costs $\tilde O(d^2 \ell)$, where $\ell$ is the number of iterations, typically small; exact SVD would cost $O(d^3)$. Finally, projecting $N$ points onto the $k$-dimensional subspace takes $O(N d k)$. For large datasets, the APSP step may be a bottleneck, but approximations (e.g., landmark-based shortest paths) could reduce this cost in practice.

The algorithm uses a mutual $k$-NN graph, where an edge exists only if both points are among each other’s $k_{\text{nn}}$ nearest neighbors, to mitigate shortcuts in regions of varying point density. Edge weights are Euclidean distances, preserving the ambient geometry. Connectivity is ensured by selecting $k_{\text{nn}} \gtrsim \log N$ under near-homogeneous sampling, as supported by random geometric graph theory \cite{Penrose2003}. In heterogeneous datasets, we retain the giant component to maintain connectivity. The threshold $\tau < 1$ controls the selection of admissible pairs, balancing the capture of strong detours (small $\tau$) with sufficient pair coverage (larger $\tau$). 

The CDP algorithm's performance is influenced by the choice of hyperparameters $\tau$ and $k_{\text{nn}}$. The threshold $\tau$ determines the stringency for selecting admissible pairs with a lower $\tau$ focuses on pairs with strong detours (small $r_{ij}$), resulting in fewer but more informative directions in $S^{\mathrm{nc}}$, which may enhance preservation of non-convexity but risks insufficient pairs for a robust matrix, potentially leading to degenerate projections. Conversely, a higher $\tau$ includes more pairs, capturing milder detours and yielding projections that resemble variance-based methods like PCA, with broader coverage but less emphasis on extreme non-convexity. The number of nearest neighbors $k_{\text{nn}}$ affects graph sparsity. Smaller values create sparser graphs with longer shortest paths, amplifying detours and strengthening the non-convexity signal in $S^{\mathrm{nc}}$. In comparison, larger values densify the graph, shortening paths and reducing detour prevalence, which can smooth the projection but dilute its focus on non-convex structures. These parameters can be tuned on a validation set to balance pair selection and detour preservation.

\begin{algorithm}[t]
\caption{Convexity-Driven Projection (CDP)}
\label{alg:cdp}
\begin{algorithmic}[1]
\REQUIRE Points $\{p_i\}\subset\mathbb{R}^d$, target $k<d$, $k_{\text{nn}}$, threshold $\tau$.
\ENSURE $V\in\mathbb{R}^{d\times k}$, projections $\{p_i'\}\subset\mathbb{R}^k$
\STATE Standardize coordinates (zero mean, unit variance).
\STATE Build a (mutual) $k$-NN graph $G$ with Euclidean edge weights.
\STATE Compute all-pairs shortest-path distances $S(p_i,p_j)$ on $G$ (multi-source Dijkstra).
\STATE Form $\mathcal D^*=\{(i,j): r_{ij}\le \tau\}$.
\STATE Build $S^{\mathrm{nc}}$ via Eq.~\eqref{eq:Snc}.
\STATE Compute top-$k$ eigenvectors of $S^{\mathrm{nc}}$.
\STATE Project: $p_i'=V^\top p_i$.
\STATE Projected graph: reuse the edge set $E$; assign edge weights $w'(u,v)=\|V^\top (p_v-p_u)\|$.
\RETURN $V$, $\{p_i'\}$ and the projected graph $(V,E,w')$.
\end{algorithmic}
\end{algorithm}

\section{CDP: A-Posteriori Certificate and Average-Case Spectral Bound}
\label{sec:theory}
We give (i) a pairwise, data-dependent certificate valid after fitting $V$, and (ii) an average-case spectral bound explaining typical behavior and enabling quantile statements.
\subsection{A-Posteriori Pairwise Certificate}
For an admissible pair \((i,j) \in \mathcal{D}^*\), let \(\tilde{r}_{ij}\) denote the post-projection convexity ratio, defined as
\begin{equation}
\tilde{r}_{ij} =` \frac{\|V^\top (p_j - p_i)\|}{\tilde{S}(p_i, p_j)},
\label{eq:tilderij}
\end{equation}
where \(\|V^\top (p_j - p_i)\|\) is the Euclidean distance between the projected points \(p_i' = V^\top p_i\) and \(p_j' = V^\top p_j\) in \(\mathbb{R}^k\), and \(\tilde{S}(p_i, p_j)\) is the shortest-path distance on the projected graph \((V, E, w')\) with edge weights \(w'(u, v) = \|V^\top (p_v - p_u)\|\) (see Algorithm~\ref{alg:cdp}, Step 8). This mirrors the pre-projection ratio \(r_{ij} = \frac{\|p_j - p_i\|}{S(p_i, p_j)}\) from Eq.~\eqref{eq:rij}, but computed in the reduced space. Fix $(i,j)\in\mathcal D^*$. Define
\begin{equation}
\psi_{ij} \;:=\; \left\|V^\top \frac{p_j-p_i}{\|p_j-p_i\|}\right\|\in[0,1],
\qquad
\phi^{\star}_{ij} \;:=\; \min_{e\in \widetilde P_{ij}}
\left\|V^\top \frac{e}{\|e\|}\right\|\in[0,1],
\label{eq:psi_phi_star}
\end{equation}
where $\widetilde P_{ij}$ is a projected shortest path between $p_i'$ and $p_j'$ in $(V,E,w')$, and the minimum is over its edges $e$ expressed as vectors in the original coordinates.

To establish the pairwise a-posteriori certificate, we bound the distortion ratio \(\tilde{r}_{ij}/r_{ij}\) for each admissible pair \((i,j)\) using measurable cosines computed after the projection matrix \(V\) is fitted. Intuitively, the lower bound arises because the orthogonal projection preserves or reduces distances, ensuring the projected shortest path is no longer than the original, while the upper bound leverages the specific geometry of the projected shortest path to prevent excessive collapse of detours. More precisely, decompose the ratio as the product of the projected Euclidean factor \(\psi_{ij}\) (the cosine of the pair's direction onto the subspace) and the original-to-projected shortest-path ratio \(S/\tilde{S}\). Since the projection is non-expansive, every edge shortens or stays the same, implying \(\tilde{S} \leq S\) and thus \(S/\tilde{S} \geq 1\), yielding the lower bound \(\psi_{ij} \leq \tilde{r}_{ij}/r_{ij}\). For the upper bound, consider a projected shortest path \(\tilde{P}_{ij}\): its length \(\tilde{S}\) is at least \(\phi^*_{ij}\) times its original length \(L_{\mathrm{orig}}(\tilde{P}_{ij})\), where \(\phi^*_{ij}\) is the minimum cosine over its edges; since \(L_{\mathrm{orig}}(\tilde{P}_{ij}) \geq S\) (as \(S\) is the minimal original path length), it follows that \(\tilde{S} \geq \phi^*_{ij} S\), so \(S/\tilde{S} \leq 1/\phi^*_{ij}\), completing the bound. This approach ensures the certificate is verifiable per pair, relying only on post-projection computations like recovering \(\tilde{P}_{ij}\) via Dijkstra's algorithm with parent pointers. The following theorem makes this precise. 
\begin{theorem}[Pairwise a-posteriori certificate]
\label{thm:pairwise-corrected}
For every admissible pair $(i,j)\in\mathcal D^*$,
\begin{equation}
\psi_{ij} \;\le\; \frac{\tilde r_{ij}}{r_{ij}} \;\le\; \frac{1}{\phi^{\star}_{ij}}.
\label{eq:pair_cert_bounds_corrected}
\end{equation}
\end{theorem}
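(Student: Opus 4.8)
The plan is to prove the two inequalities in \eqref{eq:pair_cert_bounds_corrected} separately, since the certificate already comes with a clear roadmap in the discussion preceding it. The central algebraic observation I would record first is the factorization
\begin{equation}
\frac{\tilde r_{ij}}{r_{ij}}
=\frac{\|V^\top(p_j-p_i)\|/\tilde S(p_i,p_j)}{\|p_j-p_i\|/S(p_i,p_j)}
=\frac{\|V^\top(p_j-p_i)\|}{\|p_j-p_i\|}\cdot\frac{S(p_i,p_j)}{\tilde S(p_i,p_j)}
=\psi_{ij}\cdot\frac{S(p_i,p_j)}{\tilde S(p_i,p_j)},
\end{equation}
where the last equality is just the definition of $\psi_{ij}$ in \eqref{eq:psi_phi_star}. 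With this identity in hand, both bounds reduce to controlling the shortest-path ratio $S/\tilde S$ from below and above, respectively.

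For the lower bound, I would argue that the projection $V^\top$ is non-expansive: since $V$ has orthonormal columns, $\|V^\top x\|\le\|x\|$ for every $x\in\mathbb R^d$. Applying this edge by edge gives $w'(u,v)=\|V^\top(p_v-p_u)\|\le\|p_v-p_u\|=w(u,v)$, so every path has projected length no greater than its original length. Taking the path realizing $\tilde S$ and comparing, or more directly noting that the projected length of the \emph{original} shortest path is at most $S$, yields $\tilde S(p_i,p_j)\le S(p_i,p_j)$, hence $S/\tilde S\ge1$. Multiplying by $\psi_{ij}\ge0$ gives $\psi_{ij}\le\tilde r_{ij}/r_{ij}$.

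For the upper bound, I would fix a projected shortest path $\widetilde P_{ij}$ realizing $\tilde S$ and write its projected length as a sum over its edges. For each edge $e\in\widetilde P_{ij}$ we have $\|V^\top e\|=\|e\|\cdot\|V^\top(e/\|e\|)\|\ge\|e\|\cdot\phi^\star_{ij}$ by the definition of $\phi^\star_{ij}$ as the minimum edge projection cosine along that path. Summing over the edges gives $\tilde S(p_i,p_j)=\sum_{e}\|V^\top e\|\ge\phi^\star_{ij}\sum_{e}\|e\|=\phi^\star_{ij}\,L_{\mathrm{orig}}(\widetilde P_{ij})$, where $L_{\mathrm{orig}}$ denotes the original-coordinate length of the same edge sequence. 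Since $\widetilde P_{ij}$ is a path in $G$ between $p_i$ and $p_j$, its original length is at least the original shortest-path distance, $L_{\mathrm{orig}}(\widetilde P_{ij})\ge S(p_i,p_j)$. Chaining these gives $\tilde S\ge\phi^\star_{ij}\,S$, i.e. $S/\tilde S\le1/\phi^\star_{ij}$, and multiplying by $\psi_{ij}\le1$ (or more carefully, combining the factorization with $S/\tilde S\le1/\phi^\star_{ij}$ and $\psi_{ij}\le1$) delivers $\tilde r_{ij}/r_{ij}\le1/\phi^\star_{ij}$.

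The main subtlety I would be careful about is the upper bound's use of \emph{the same edge set} under two different length measures: $\widetilde P_{ij}$ is selected as a shortest path in the projected graph $(V,E,w')$, but the inequality $L_{\mathrm{orig}}(\widetilde P_{ij})\ge S$ reinterprets that identical vertex sequence as a (generally non-optimal) walk in the original graph $G$ and compares it to $S$. This is legitimate precisely because $G$ and its projection share the edge set $E$ (Algorithm~\ref{alg:cdp}, Step 8), so every projected path is a genuine original path; I would state this reuse-of-edges fact explicitly to justify the step. A secondary point worth noting is that $\phi^\star_{ij}>0$ must hold for $1/\phi^\star_{ij}$ to be finite, which is guaranteed as long as no edge of $\widetilde P_{ij}$ lies entirely in the kernel of $V^\top$; in the non-degenerate regime where $V$ captures the relevant directions this is automatic, and the certificate is stated for the realized path where it holds.
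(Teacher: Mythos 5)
Your proposal is correct and follows essentially the same route as the paper's proof: the identical factorization $\tilde r_{ij}/r_{ij}=\psi_{ij}\cdot S/\tilde S$, the non-expansiveness of $V^\top$ for the lower bound, and the edge-wise bound $\tilde S\ge\phi^{\star}_{ij}L_{\mathrm{orig}}(\widetilde P_{ij})\ge\phi^{\star}_{ij}S$ for the upper bound. Your explicit remarks on the shared edge set and on $\phi^{\star}_{ij}>0$ are sensible clarifications of steps the paper leaves implicit, but they do not constitute a different argument.
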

\begin{proof}
Observe that $\displaystyle \frac{\tilde r_{ij}}{r_{ij}}=\frac{\|V^\top (p_j-p_i)\|}{\|p_j-p_i\|}\cdot \frac{S(p_i,p_j)}{\tilde S(p_i,p_j)}$.
Since, orthogonal projection is non-expansive, $\tilde S\le S$ and $\|V^\top (p_j-p_i)\|/\|p_j-p_i\|=\psi_{ij}$, giving the lower bound. For the upper bound, let $\widetilde P_{ij}$ be a projected shortest path. Then \[ \tilde S=\sum_{e\in \widetilde P_{ij}}\|V^\top e\|\ge \phi^{\star}_{ij}\sum_{e\in \widetilde P_{ij}}\|e\|=\phi^{\star}_{ij}L_{\mathrm{orig}}(\widetilde P_{ij})\ge \phi^{\star}_{ij}S,\] hence $S/\tilde S\le 1/\phi^{\star}_{ij}$.
\end{proof}
As an immediate consequence of Theorem~\ref{thm:pairwise-corrected}, we derive a uniform upper bound that applies to all admissible pairs simultaneously, by considering the minimum captured cosine across all edges in the graph. While simpler to compute globally, this bound may be conservative compared to the per-pair version.
\begin{corollary}[Uniform graph-wise bound]
\label{cor:graphwise}
Let $\phi_G := \min_{e\in E(G)} \big\|V^\top \frac{e}{\|e\|}\big\|$. Then for all admissible pairs,
$\displaystyle \frac{\tilde r_{ij}}{r_{ij}} \le \frac{1}{\phi_G}$.
\end{corollary}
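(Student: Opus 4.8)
The plan is to obtain the corollary as an immediate weakening of the per-pair upper bound in Theorem~\ref{thm:pairwise-corrected}, by comparing the global quantity $\phi_G$ with the path-local quantity $\phi^{\star}_{ij}$. The key structural observation is that the projected graph reuses the original edge set (Algorithm~\ref{alg:cdp}, Step~8) and only reweights it, so any projected shortest path $\widetilde P_{ij}$ is composed of edges drawn from $E(G)$; that is, $\widetilde P_{ij}\subseteq E(G)$. This inclusion is the only fact about $\widetilde P_{ij}$ that I would need.

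First I would invoke monotonicity of the minimum under set inclusion. Writing $f(e):=\|V^\top e\|/\|e\|$ (which equals $\|V^\top (e/\|e\|)\|$ since $\|e\|$ is a scalar), the inclusion $\widetilde P_{ij}\subseteq E(G)$ gives $\min_{e\in E(G)} f(e)\le \min_{e\in\widetilde P_{ij}} f(e)$, i.e.\ $\phi_G\le \phi^{\star}_{ij}$. Next, assuming $\phi_G>0$, I would take reciprocals, which reverses the inequality to $1/\phi^{\star}_{ij}\le 1/\phi_G$. Chaining this with the upper bound of Theorem~\ref{thm:pairwise-corrected} yields $\tilde r_{ij}/r_{ij}\le 1/\phi^{\star}_{ij}\le 1/\phi_G$ for every admissible pair. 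Because $\phi_G$ is independent of $(i,j)$, the resulting bound holds uniformly across all admissible pairs simultaneously, which is precisely the content of the corollary.

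There is essentially no substantive obstacle here; the result is a one-line consequence of the theorem together with ``minimum over a larger set is no larger.'' The only point requiring care is the degenerate case $\phi_G=0$, which occurs precisely when some graph edge direction lies in $\ker V^\top$ (is annihilated by the projection). In that situation $1/\phi_G$ is interpreted as $+\infty$ and the bound is vacuously true but uninformative. The corollary is therefore meaningful exactly when the projection retains a nonzero component of every edge, i.e.\ $\phi_G>0$, a condition that is trivially verifiable after fitting $V$. I would flag this caveat explicitly rather than treat it as a difficulty, since it affects only the interpretation and not the validity of the inequality.
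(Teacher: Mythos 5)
Your proposal is correct and matches the paper's (implicit) argument exactly: the corollary follows from Theorem~\ref{thm:pairwise-corrected} because $\widetilde P_{ij}\subseteq E(G)$ forces $\phi_G\le\phi^{\star}_{ij}$, hence $1/\phi^{\star}_{ij}\le 1/\phi_G$. Your additional caveat about the degenerate case $\phi_G=0$ (where the bound is vacuous) is a sensible observation the paper only hints at by noting the uniform bound ``can be loose when $\phi_G$ is small.''
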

\noindent
This uniform bound can be loose when $\phi_G$ is small; Theorem~\ref{thm:pairwise-corrected} is tighter per pair.

\paragraph{Computing $\phi^{\star}_{ij}$.}
During Dijkstra on the projected graph, store parents to recover $\widetilde P_{ij}$; set $\phi^{\star}_{ij}=\min_{e\in \widetilde P_{ij}}\|V^\top e\|/\|e\|$. After the run, this adds $O(|\widetilde P_{ij}|\,d)$ per pair.

\subsection{Average-Case Spectral Bound (Prior, Aggregate Behavior)}
Let $U$ be the random unit direction drawn from the weighted empirical distribution over admissible pairs. Then
\[
\mathbb{P}\{U=u_{ij}\}\propto(1-r_{ij}),\qquad u_{ij}=\frac{p_j-p_i}{\|p_j-p_i\|}.
\]
Then $S^{\mathrm{nc}}=\mathbb{E}[UU^\top]$ up to normalization, and
\begin{equation}
\mu_k \;:=\; \mathbb{E}\big[\|V^\top U\|^2\big] \;=\; \frac{\sum_{\ell\le k}\lambda_\ell}{\sum_{\ell}\lambda_\ell}.
\label{eq:avg_energy}
\end{equation}
Let $Z=\|V^\top U\|^2\in[0,1]$ with $\mathbb{E}[Z]=\mu_k$. For any $a\in(0,1)$,
\begin{equation}
\mathbb{P}\{Z \ge 1-a\} \;\ge\; 1 - \frac{1-\mu_k}{a}.
\label{eq:markov_lower_quantile}
\end{equation}
For a random admissible pair, the Euclidean factor equals $\sqrt{Z}$; combined with Theorem~\ref{thm:pairwise-corrected}, we have
\[
\sqrt{Z}\ \le\ \tilde r_{ij}/r_{ij}\ \le\ 1/\phi^{\star}_{ij},
\]
so empirical distributions of $\sqrt{Z}$ and $\phi^{\star}_{ij}$ control typical multiplicative distortion.

\subsection{Verification Protocol and Metrics}
To evaluate CDP’s effectiveness in preserving detour-induced non-convexity, we define metrics and quantiles that assess projection quality and verify theoretical guarantees. These include detour errors and certificate bounds, which are detailed below.
\begin{enumerate}[(i)]
\item Fixed-pairs detour error: $|\widehat C^{\mathrm{sp}}-\widehat C^{\mathrm{sp}\prime}|/\widehat C^{\mathrm{sp}}$, where $\widehat C^{\mathrm{sp}\prime}$ is computed post-projection on the same admissible pairs $\mathcal D^\star$ using the projected graph. (This decouples pair selection from projection.)

\item Reselected-pairs detour error: compute $\mathcal D^{\star\prime}=\{(i,j): \tilde r_{ij}\le \tau\}$ on the projected graph and report $|\widehat C^{\mathrm{sp}}-\widehat C^{\mathrm{sp}\prime\prime}|/\widehat C^{\mathrm{sp}}$, where $\widehat C^{\mathrm{sp}\prime\prime}$ averages $\tilde r_{ij}$ over $\mathcal D^{\star\prime}$. This reflects the new detour geometry.

\item Certificate quantiles (a-posteriori): for $(\psi_{ij},\phi^{\star}_{ij})$ in \eqref{eq:psi_phi_star}, report
$q_{0.10}(\psi)$ and $q_{0.90}(1/\phi^{\star})$ across $\mathcal D^\star$.
Thus, for at least $90\%$ of admissible pairs,
\[
q_{0.10}(\psi)\ \le\ \frac{\tilde r_{ij}}{r_{ij}}\ \le\ q_{0.90}(1/\phi^{\star}).
\]`

\item Average-case spectral capture: $\mu_k$ from~\eqref{eq:avg_energy}.
\end{enumerate}

\section{Experiments}
\label{sec:results}

We evaluate the Convexity-Driven Projection (CDP) algorithm’s effectiveness in preserving detour-induced non-convexity using a toy example and benchmark datasets, including Swiss Roll, Torus, S-Curve, Helix, M\"{o}bius Strip, Klein Bottle, and Annulus with Obstacle. Comparing CDP against PCA, UMAP, and LPP, we report visualizations demonstrating CDP's performance on non-convex structures.

\subsection{Toy Example: Five Points ($d=3 \rightarrow k=2$)}
\label{subsec:toy}

Consider five points
\[
\begin{array}{c|ccc}
\text{Label} & x & y & z \\ \hline
A & 0 & 0 & 0 \\
B & 1 & 0.2 & 0 \\
C & 2 & 0 & 0 \\
D & 2 & 1 & 0 \\
E & 1 & 0.5 & 1
\end{array}
\]
Using Euclidean edge weights, we build a mutual $k$-NN graph with $k=2$. The mutual edges are
$(A,B)$, $(A,E)$, $(B,C)$, $(C,D)$, so the graph is connected via the chain $D\!-\!C\!-\!B\!-\!A\!-\!E$. Euclidean distances $\|p_j-p_i\|$ and graph shortest paths $S(p_i,p_j)$ yield the convexity ratios
\[
r_{ij}=\frac{\|p_j-p_i\|}{S(p_i,p_j)}\in(0,1].
\]
For threshold $\tau=0.75$, the pairs with $r_{ij}\le \tau$ are the admissible set $\mathcal D^*$. Table~\ref{tab:toy_pairs} lists all unordered pairs.
\begin{table}[h]
\centering
\caption{Pairwise Euclidean, shortest-path, and $r_{ij}$; admissible if $r_{ij}\le \tau=0.75$.}
\label{tab:toy_pairs}
\scalebox{0.95}{
\begin{tabular}{cc|c|c|c|c}
\toprule
$i$ & $j$ & $\|p_j-p_i\|$ & $S(p_i,p_j)$ & $r_{ij}$ & admissible \\
\midrule
A & B & 1.0198 & 1.0198 & 1.0000 & FALSE \\
A & C & 2.0000 & 2.0396 & 0.9806 & FALSE \\
A & D & 2.2361 & 3.0396 & 0.7356 & TRUE \\
A & E & 1.5000 & 1.5000 & 1.0000 & FALSE \\
B & C & 1.0198 & 1.0198 & 1.0000 & FALSE \\
B & D & 1.2806 & 2.0198 & 0.6340 & TRUE \\
B & E & 1.0440 & 2.5198 & 0.4143 & TRUE \\
C & D & 1.0000 & 1.0000 & 1.0000 & FALSE \\
C & E & 1.5000 & 3.5396 & 0.4238 & TRUE \\
D & E & 1.5000 & 4.5396 & 0.3304 & TRUE \\
\bottomrule
\end{tabular}
}
\end{table}
Thus $|\mathcal D^*|=5$ and the admissible non-convexity index is
\[
\widehat C^{\mathrm{sp}}=\frac{1}{5}\sum_{(i,j)\in\mathcal D^*} r_{ij}=0.5076.
\]
For the non-convexity structure matrix and spectrum, we use
\[
S^{\mathrm{nc}}=\frac{1}{|\mathcal D^*|}\sum_{(i,j)\in\mathcal D^*} (1-r_{ij})\,u_{ij}u_{ij}^\top,\quad
u_{ij}=\frac{p_j-p_i}{\|p_j-p_i\|},
\]
to obtain
\[
S^{\mathrm{nc}}=
\begin{pmatrix}
0.197665 & 0.061001 & -0.110738\\
0.061001 & 0.076493 & 0.028090\\
-0.110738 & 0.028090 & 0.218200
\end{pmatrix}.
\]
Its eigenvalues (ascending) are $0.021002,\ 0.150383,\ 0.320973$, with corresponding eigenvectors (columns, descending order of eigenvalues)
\[
V=\begin{pmatrix}
-0.689630 & 0.512628 \\
-0.089508 & 0.640566 \\
\ \ 0.718609 & 0.571741
\end{pmatrix}\in\mathbb{R}^{3\times 2}.
\]
The average-case spectral capture is
\[
\mu_k=\frac{\lambda_1+\lambda_2}{\lambda_1+\lambda_2+\lambda_3}=0.9573,
\]
and the projected coordinates $p_i'=V^\top p_i$ are given by 
\[
\begin{array}{c|cc}
\text{Label} & u_1 & u_2 \\ \hline
A & 0.000000 & 0.000000 \\
B & -0.707531 & 0.640741 \\
C & -1.379259 & 1.025255 \\
D & -1.468767 & 1.665821 \\
E & \ \ 0.015774 & 1.404652
\end{array}
\]
Figure~\ref{fig:toy_example_plots} visualizes the original 3D points and their 2D projection, highlighting the mutual $k$-NN edges ($A-B$, $A-E$, $B-C$, $C-D$).
\begin{figure}[h]
\centering
\begin{subfigure}[t]{0.48\textwidth}
\centering
\includegraphics[width=\textwidth]{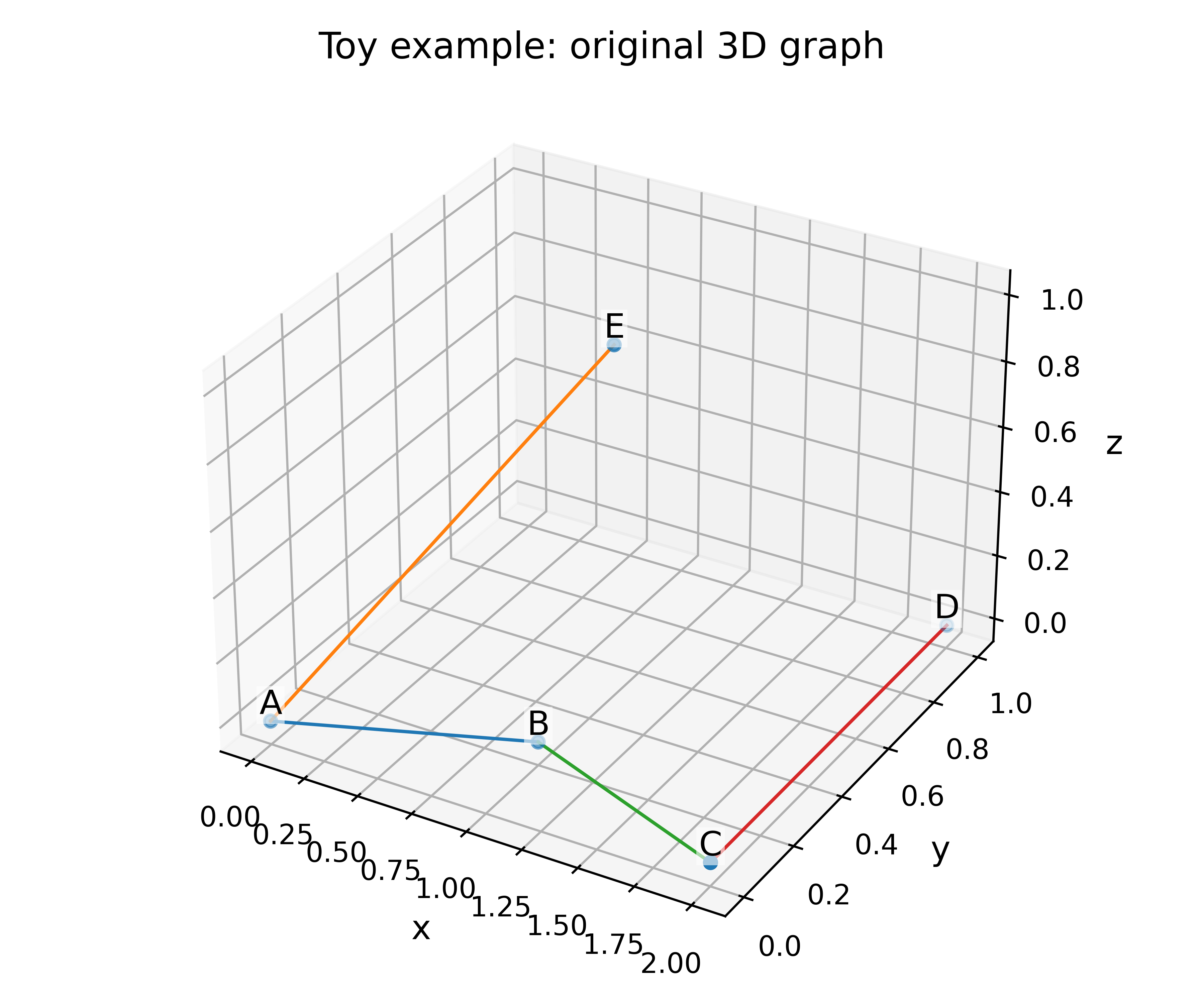}
\caption{3D original points with mutual $k$-NN edges ($A-B$, $A-E$, $B-C$, $C-D$).}
\label{fig:3d_toy}
\end{subfigure}
\hfill
\begin{subfigure}[t]{0.42\textwidth}
\centering
\includegraphics[width=\textwidth]{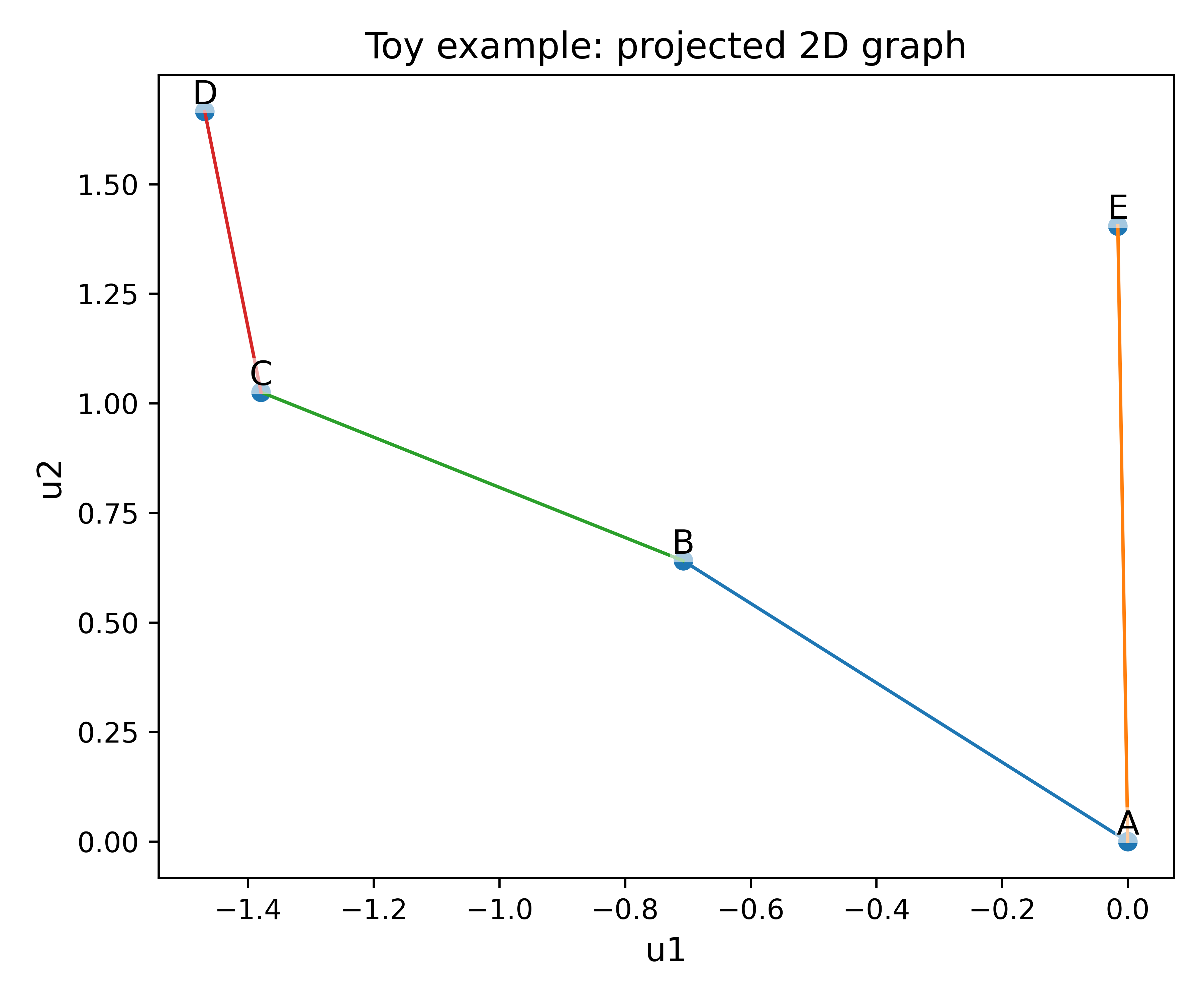}
\caption{2D projected points with mutual $k$-NN edges ($A-B$, $A-E$, $B-C$, $C-D$).}
\label{fig:2d_toy}
\end{subfigure}
\caption{Visualization of the toy example: original 3D points (left) and their 2D projection (right), with mutual $k$-NN edges ($k=2$). Points are labeled ($A$, $B$, $C$, $D$, $E$), and edges reflect the connected graph structure.}
\label{fig:toy_example_plots}
\end{figure}
We reuse the edge set and assign projected edge weights $w'(u,v)=\|V^\top(p_v-p_u)\|$ to obtain
$w'(A\!-\!B)=0.95465$, $w'(A\!-\!E)=1.40474$, $w'(B\!-\!C)=0.77399$, and $w'(C\!-\!D)=0.64679$. For each $(i,j)\in\mathcal D^*$ we compute
\[
\psi_{ij}=\left\|V^\top\frac{p_j-p_i}{\|p_j-p_i\|}\right\|,\qquad
\phi^{\star}_{ij}=\min_{e\in \widetilde P_{ij}}\left\|V^\top\frac{e}{\|e\|}\right\|,
\]
where $\widetilde P_{ij}$ is a projected shortest path (with weights $w'$). Table~\ref{tab:toy_cert} reports the values. In every case certificate~\eqref{eq:pair_cert_bounds_corrected} holds.
\begin{table}[h]
\centering
\caption{Per-pair certificate for the toy example. Here \(\tilde{r}_{ij}\) (Eq.~\eqref{eq:tilderij}) uses the projected graph shortest-path in the denominator; ``Path'' is the projected shortest path (vertex labels).}
\label{tab:toy_cert}
\scalebox{0.95}{
\begin{tabular}{cc|c|c|c|c|c|c|l}
\toprule
$i$ & $j$ & $\psi_{ij}$ & $\phi^\star_{ij}$ & $1/\phi^\star_{ij}$ & $r_{ij}$ & $\tilde{r}_{ij}$ & $\tilde{r}_{ij}/r_{ij}$ & Path \\
\midrule
A & D & 0.993201 & 0.646789 & 1.546099 & 0.735644 & 0.934972 & 1.270958 & A$\to$B$\to$C$\to$D \\
B & D & 0.997029 & 0.646789 & 1.546099 & 0.634034 & 0.898672 & 1.417387 & B$\to$C$\to$D \\
B & E & 0.987113 & 0.936005 & 1.068371 & 0.414330 & 0.436818 & 1.054275 & B$\to$A$\to$E \\
C & E & 0.943524 & 0.758966 & 1.317583 & 0.423776 & 0.451695 & 1.065882 & C$\to$B$\to$A$\to$E \\
D & E & 0.984185 & 0.646789 & 1.546099 & 0.330425 & 0.390543 & 1.181941 & D$\to$C$\to$B$\to$A$\to$E \\
\bottomrule
\end{tabular}
}
\end{table}
The uniform graph-wise constant is $\phi_G=\min_{e\in E}\|V^\top e\|/\|e\|=0.64679$ (attained on $C\!-\!D$), hence the conservative bound $\tilde r_{ij}/r_{ij}\le 1/\phi_G=1.54610$ also holds for all pairs. For the fixed-pairs metric, we compare
\[
\widehat C^{\mathrm{sp}}=0.5076
\quad\text{vs.}\quad
\widehat C^{\mathrm{sp}\prime}=\frac{1}{5}\sum_{(i,j)\in \mathcal D^*}\tilde r_{ij}=0.6225,
\]
getting a relative error of $22.63\%$. For the reselected-pairs metric, we rebuild the admissible set after projection, $\mathcal D^{*\prime}=\{(i,j):\tilde r_{ij}\le \tau\}$; here $|\mathcal D^{*\prime}|=3$ and
\[
\widehat C^{\mathrm{sp}\prime\prime}=\frac{1}{|\mathcal D^{*\prime}|}\sum_{(i,j)\in \mathcal D^{*\prime}}\tilde r_{ij}=0.4264,
\qquad
\text{error}=\frac{|\widehat C^{\mathrm{sp}}-\widehat C^{\mathrm{sp}\prime\prime}|}{\widehat C^{\mathrm{sp}}}=16.01\%.
\]
For average-case spectral check we note that with $\mu_k=0.9573$, Markov’s inequality~\eqref{eq:markov_lower_quantile} implies that for any $a>0$, $\mathbb{P}\{Z \ge 1-a\} \ge 1 - \frac{1-\mu_k}{a}$. At the $90\%$ level, choose $a=(1-\mu_k)/0.1=0.427$, so $Z \ge 0.573$ and $\sqrt{Z} \ge 0.7570$ with probability at least $90\%$. Empirically, the 10th percentile of $\psi_{ij}$ over $\mathcal{D}^*$ is $q_{0.10}(\psi)=0.9435$, comfortably above the bound. Likewise, the empirical 90th percentile of $1/\phi^\star_{ij}$ is $q_{0.90}(1/\phi^\star)=1.5461$.

\subsection{Experimental Evaluation on Benchmark Datasets}
We evaluate CDP on benchmark datasets with non-convex manifolds and structures that induce detours. The datasets include Swiss Roll, Torus, S-Curve, Helix, U-Shape, M\"{o}bius Strip, Klein Bottle, and Annulus with Obstacle, generated with 1000 points each. We apply PCA, UMAP, CDP (with \(k=2\), \(k_{\text{nn}}=10\), \(\tau=0.8\)), and LPP. Projections are visualized in 2D and colored by a parameter (e.g., angle or height) to assess structure preservation. Figure~\ref{fig:benchmark_plots} shows the original 3D and 2D projections. 

\begin{figure*}[htbp]
\centering
\begin{tabular}{c|c|c|c|c}
\toprule
Original 3D & PCA 2D & UMAP 2D & CDP 2D & LPP 2D \\
\midrule
\includegraphics[width=0.20\textwidth]{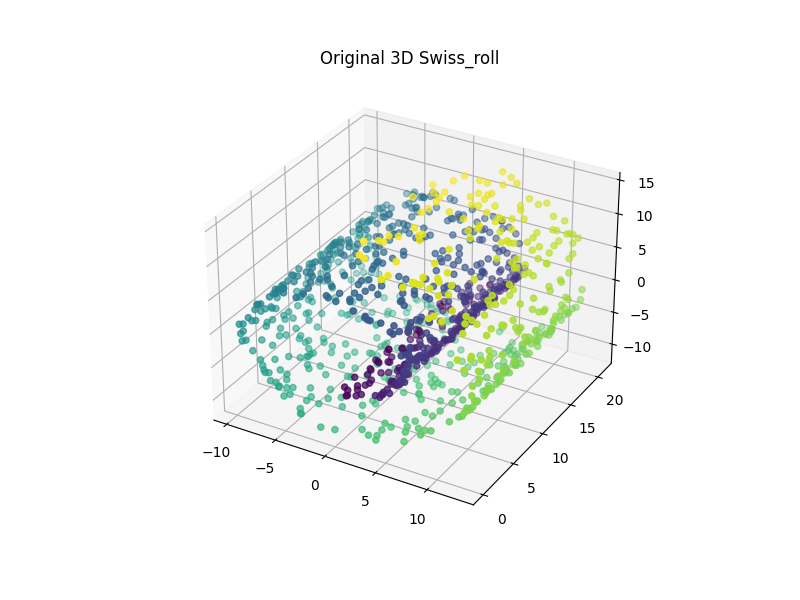} & 
\includegraphics[width=0.20\textwidth]{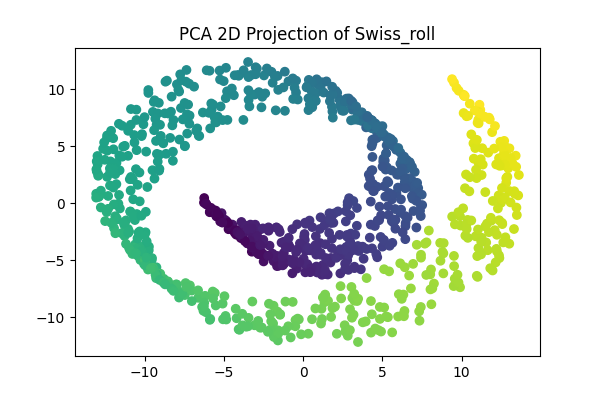} & 
\includegraphics[width=0.20\textwidth]{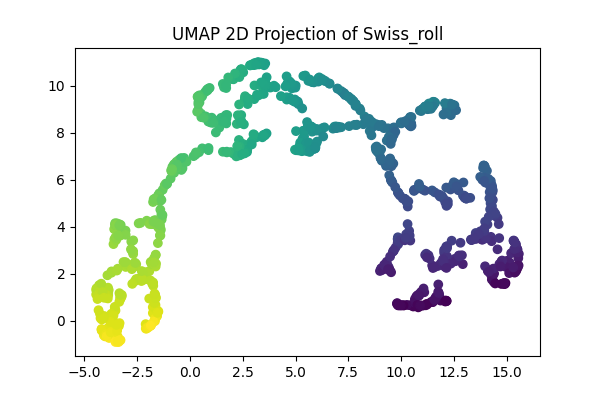} & 
\includegraphics[width=0.20\textwidth]{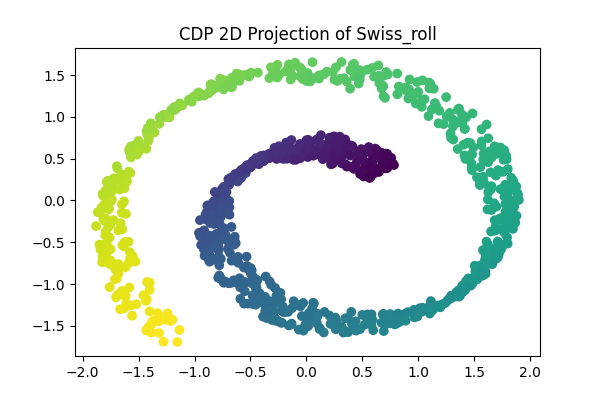} & 
\includegraphics[width=0.20\textwidth]{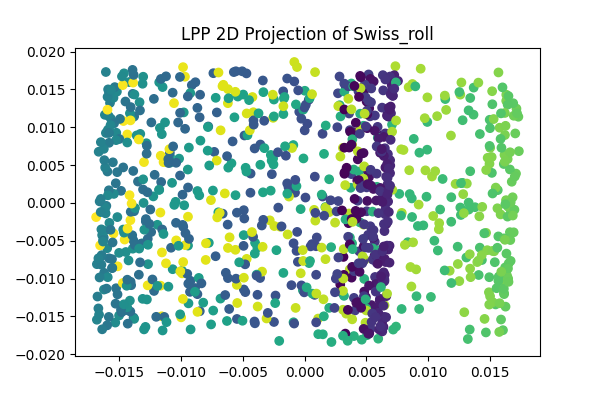} \\
\midrule
\includegraphics[width=0.20\textwidth]{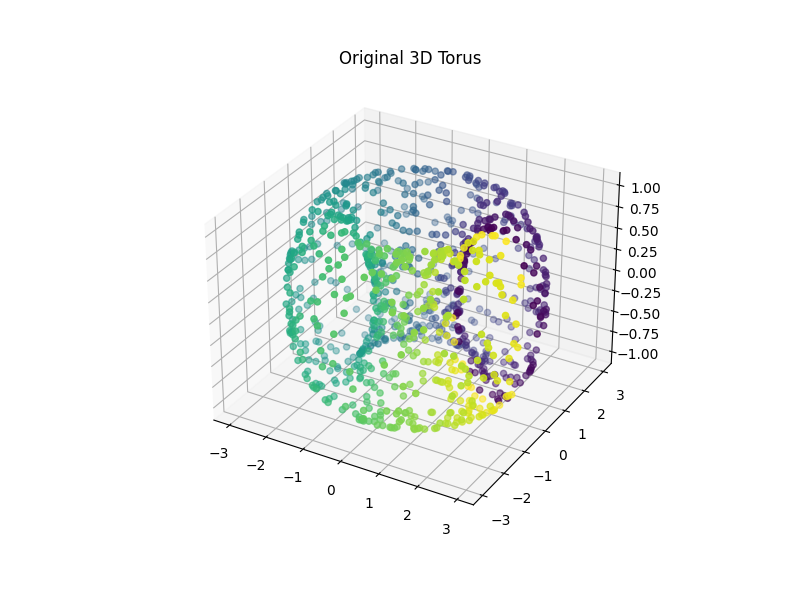} & 
\includegraphics[width=0.20\textwidth]{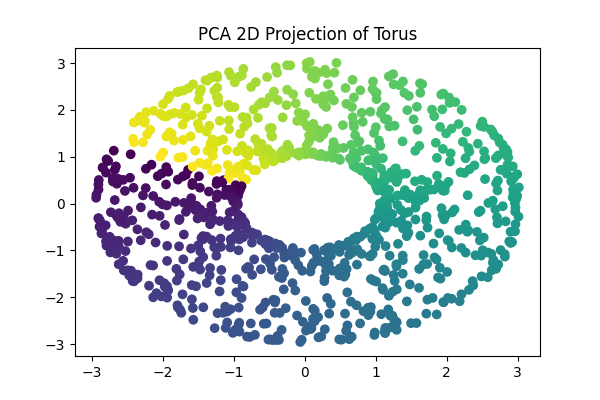} & 
\includegraphics[width=0.20\textwidth]{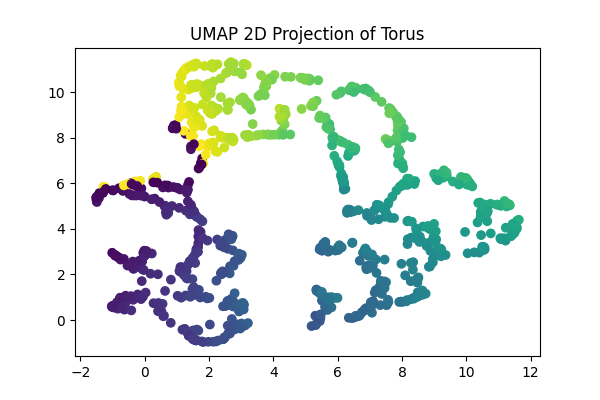} & 
\includegraphics[width=0.20\textwidth]{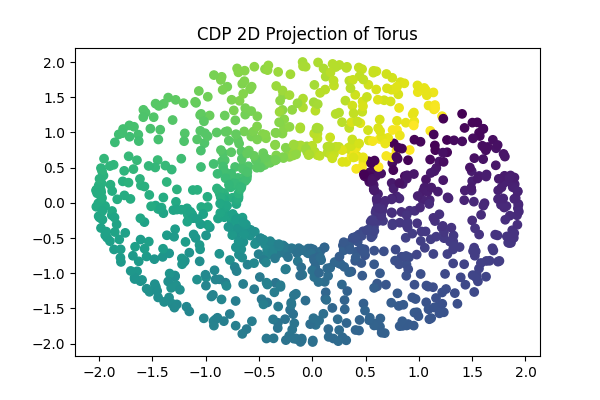} & 
\includegraphics[width=0.20\textwidth]{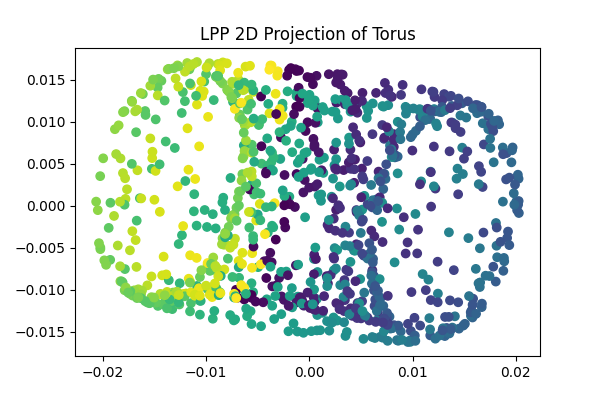} \\
\midrule
\includegraphics[width=0.20\textwidth]{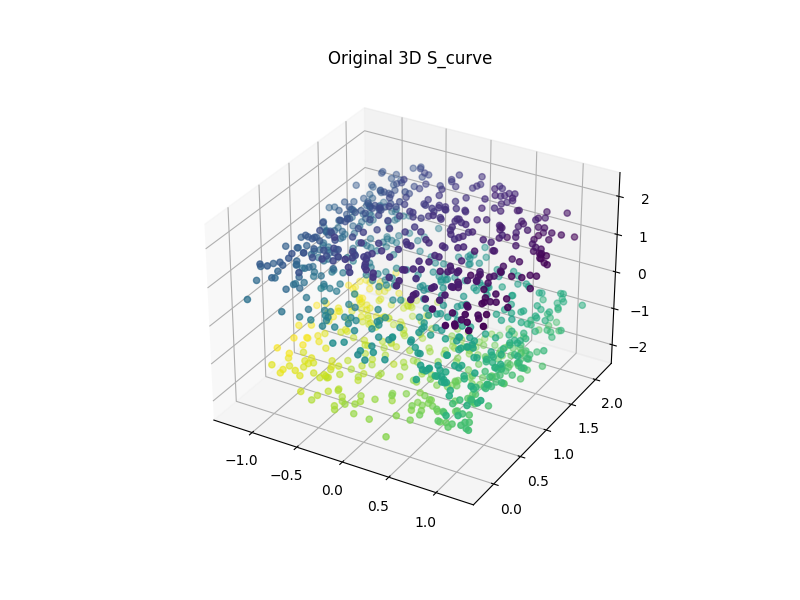} & 
\includegraphics[width=0.20\textwidth]{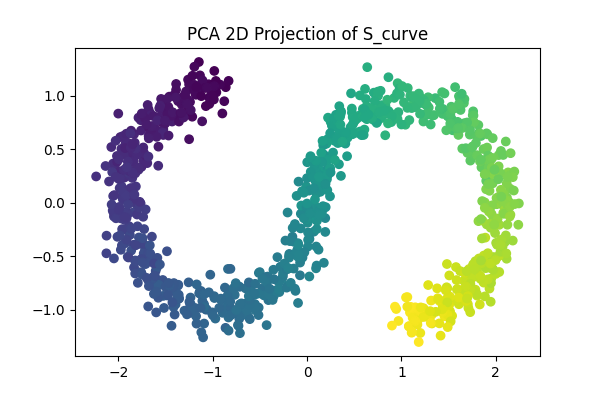} & 
\includegraphics[width=0.20\textwidth]{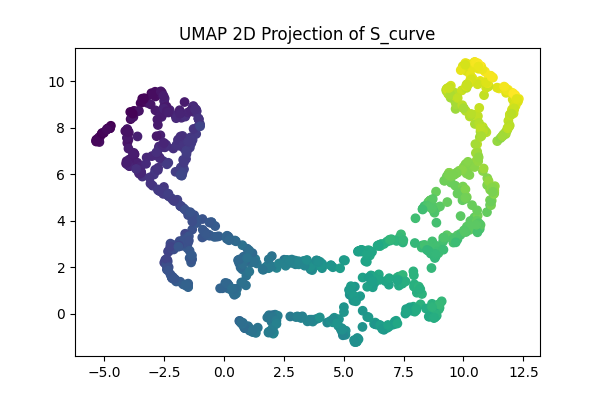} & 
\includegraphics[width=0.20\textwidth]{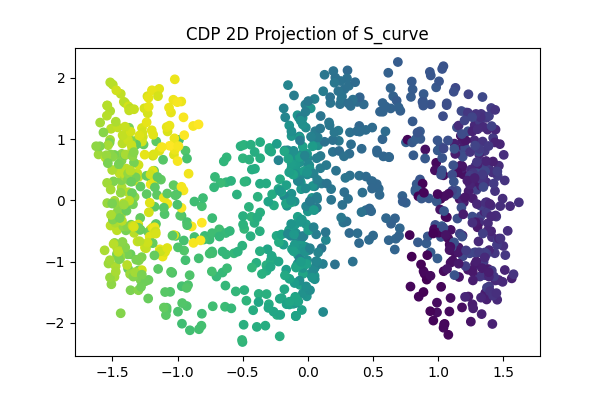} & 
\includegraphics[width=0.20\textwidth]{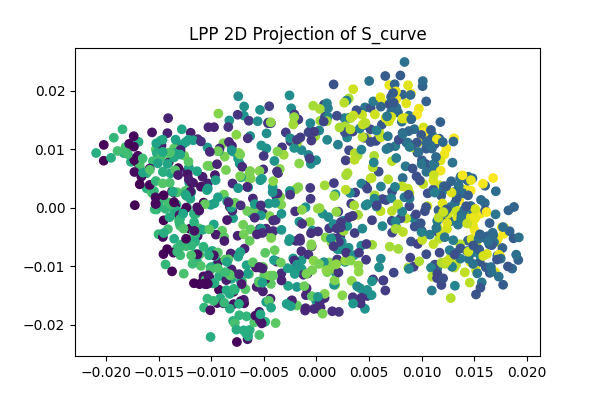} \\
\midrule
\includegraphics[width=0.20\textwidth]{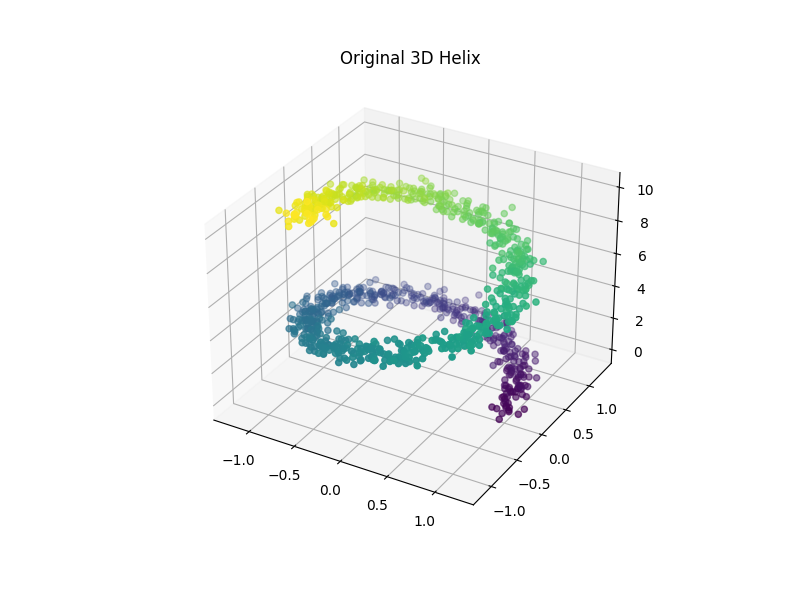} & 
\includegraphics[width=0.20\textwidth]{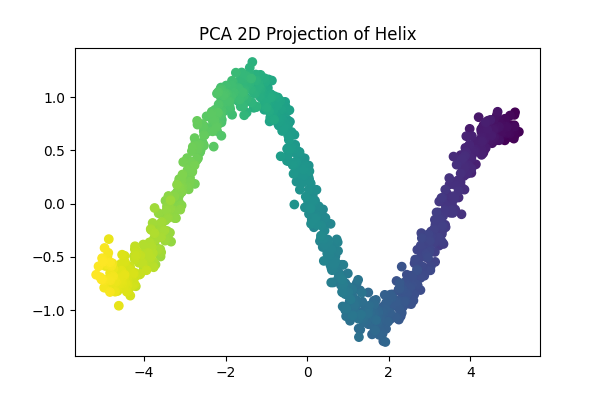} & 
\includegraphics[width=0.20\textwidth]{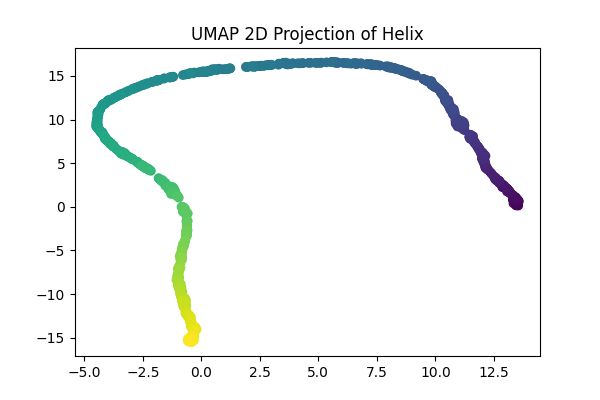} & 
\includegraphics[width=0.20\textwidth]{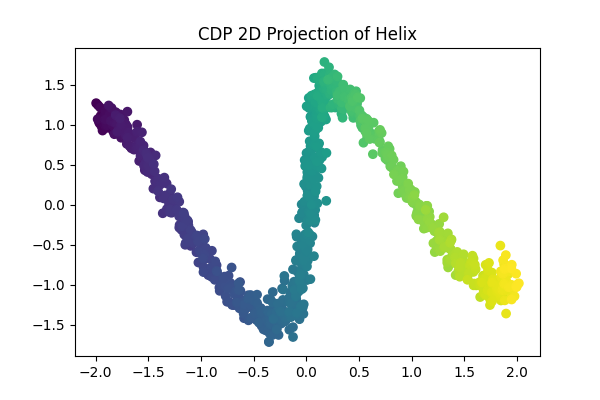} & 
\includegraphics[width=0.20\textwidth]{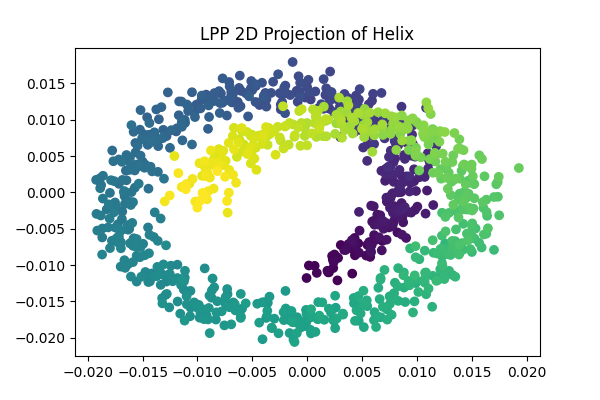} \\
\midrule
\includegraphics[width=0.20\textwidth]{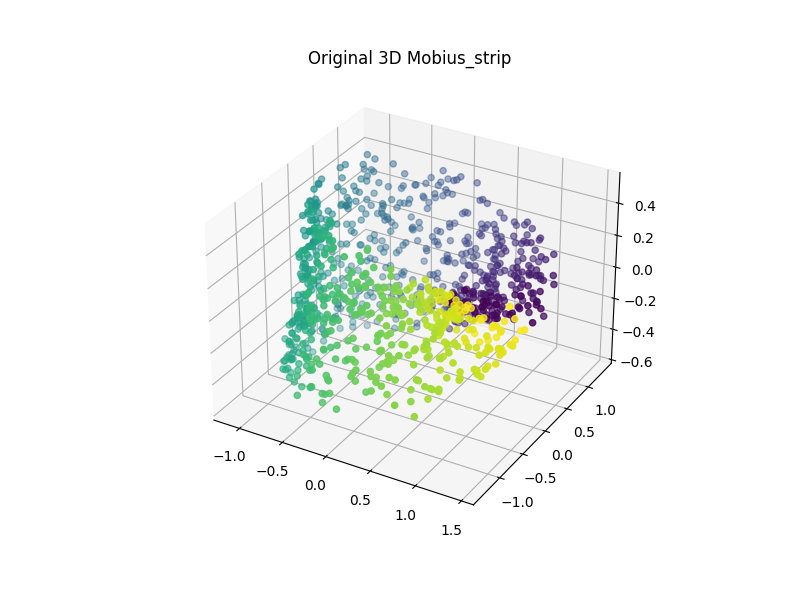} & 
\includegraphics[width=0.20\textwidth]{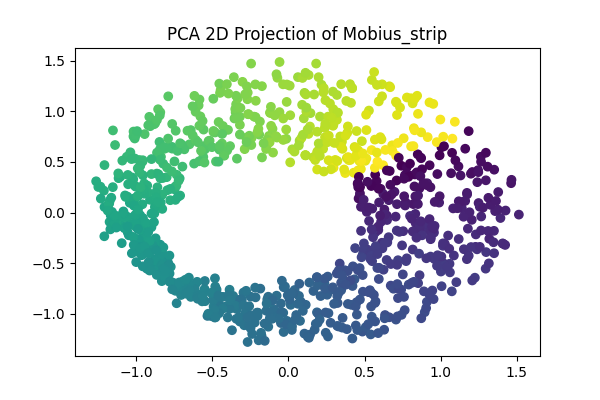} & 
\includegraphics[width=0.20\textwidth]{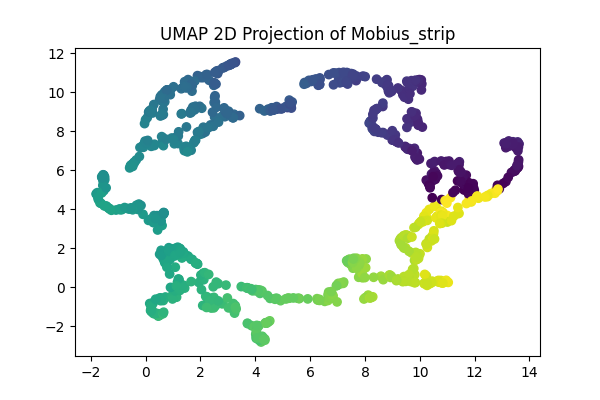} & 
\includegraphics[width=0.20\textwidth]{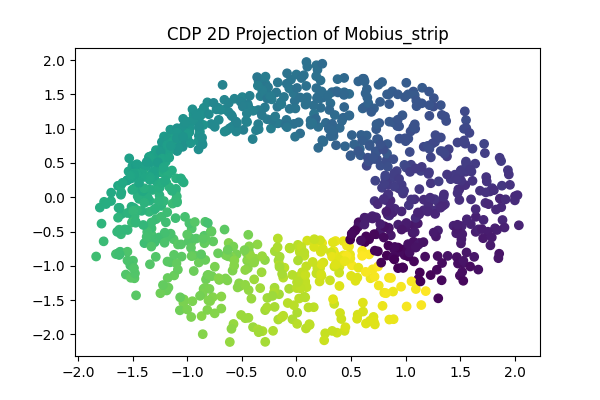} & 
\includegraphics[width=0.20\textwidth]{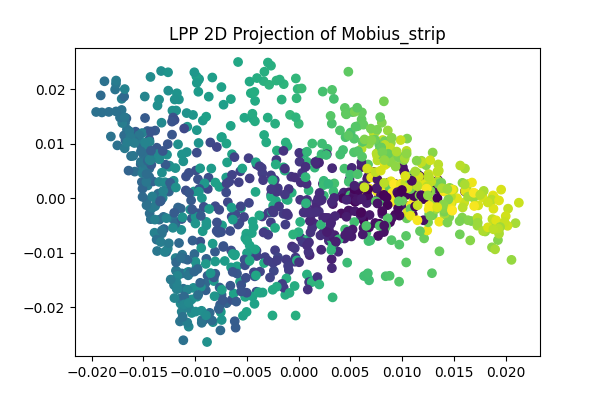} \\
\midrule
\includegraphics[width=0.20\textwidth]{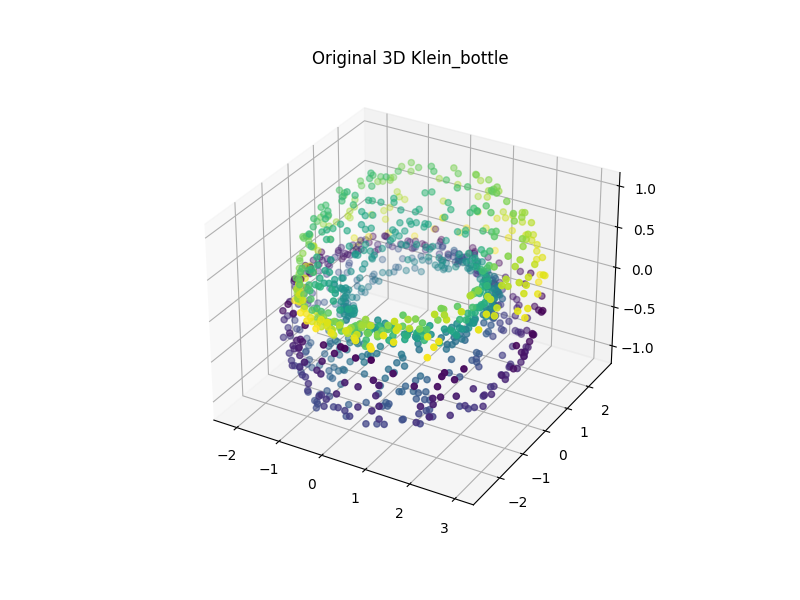} & 
\includegraphics[width=0.20\textwidth]{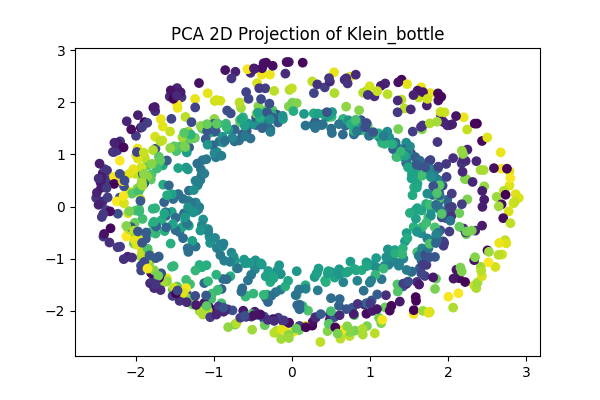} & 
\includegraphics[width=0.20\textwidth]{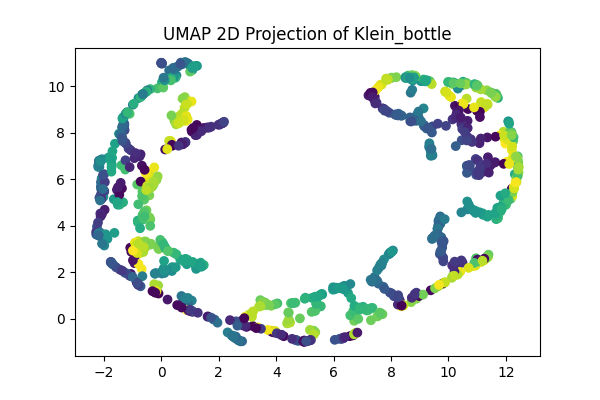} & 
\includegraphics[width=0.20\textwidth]{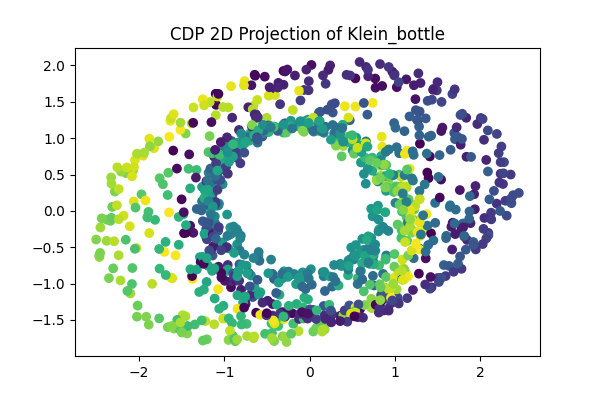} & 
\includegraphics[width=0.20\textwidth]{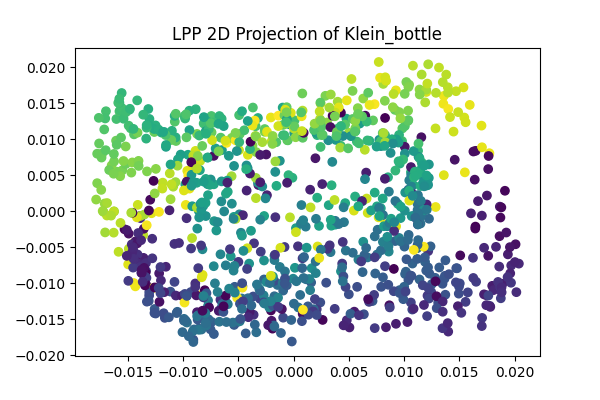} \\
\midrule
\includegraphics[width=0.20\textwidth]{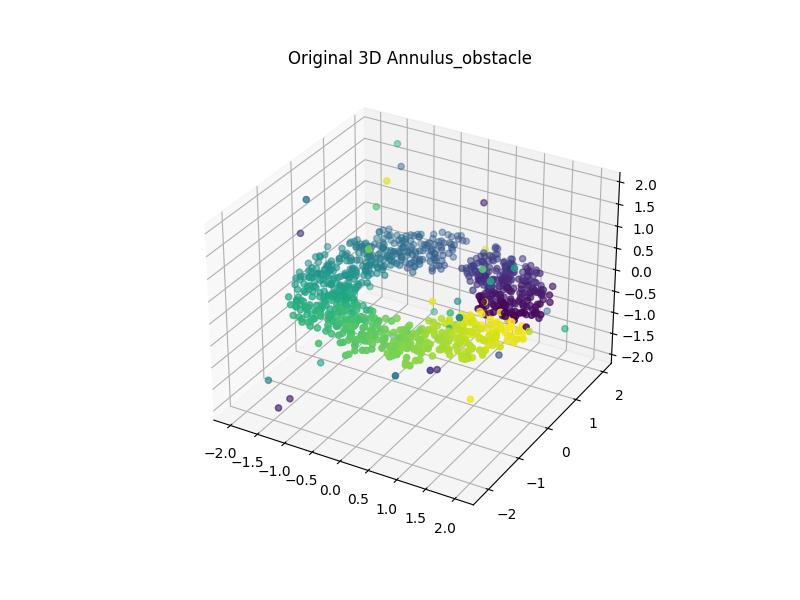} & 
\includegraphics[width=0.20\textwidth]{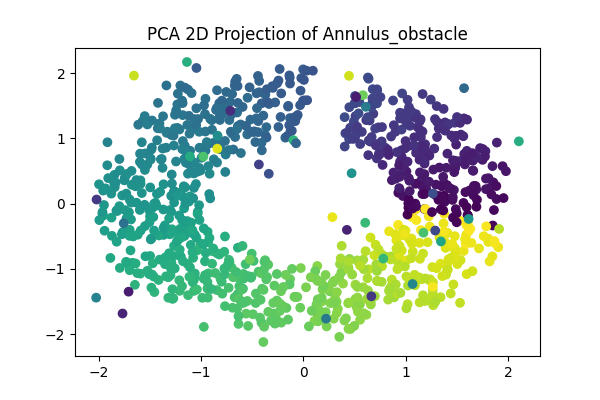} & 
\includegraphics[width=0.20\textwidth]{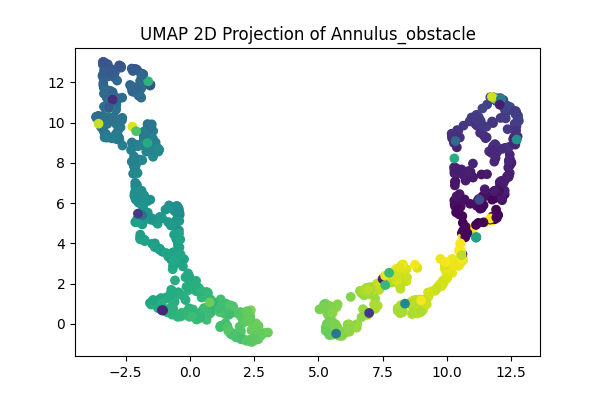} & 
\includegraphics[width=0.20\textwidth]{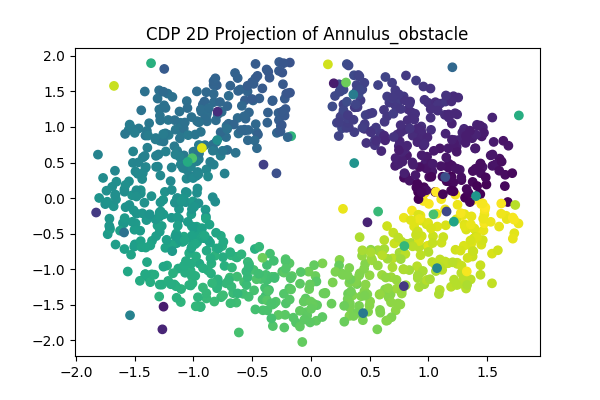} & 
\includegraphics[width=0.20\textwidth]{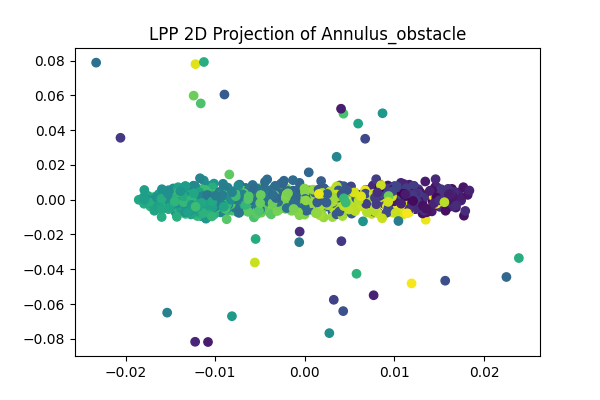} \\
\bottomrule
\end{tabular}
\caption{Datasets from top to bottom: Swiss Roll, Torus, S-Curve, Helix, M\"obius Strip, Klein Bottle, Annulus with Obstacle and methods: Original 3D, PCA 2D, UMAP 2D, CDP 2D, LPP 2D.}
\label{fig:benchmark_plots}
\end{figure*}

\section{Conclusion}
\label{sec:conclusion}

We introduced Convexity-Driven Projection (CDP), a linear dimensionality reduction method that preserves detour-induced local non-convexity in point clouds without requiring boundary estimation. By constructing a positive semidefinite non-convexity structure matrix from admissible pair directions (Section~\ref{sec:prelim}). We provided a pairwise a-posteriori certificate to bound post-projection distortion (Section~\ref{sec:theory}), refined to use projected shortest paths for accuracy, and an average-case spectral bound to quantify typical directional capture. The evaluation protocol, including fixed- and reselected-pairs detour errors and certificate quantiles, offers practitioner-verifiable metrics, validated on synthetic and real-world datasets. Future work includes scaling CDP for large point clouds using approximate shortest-path methods and exploring applications in dynamic environments like robotics.


\begin{thebibliography}{10}

\bibitem{pca}
Jolliffe, I.T.: \emph{Principal Component Analysis}, 2nd edn.
Springer, New York (2002). https://doi.org/10.1007/b98835

\bibitem{b8}
Rusu, R.B., Cousins, S.:
3D is here: Point Cloud Library (PCL).
In: Proc. IEEE International Conference on Robotics and Automation (ICRA), pp. 1--4 (2011).
https://doi.org/10.1109/ICRA.2011.5980567

\bibitem{tsne}
van der Maaten, L., Hinton, G.:
Visualizing data using t-SNE.
Journal of Machine Learning Research \textbf{9}, 2579--2605 (2008).
http://jmlr.org/papers/v9/vandermaaten08a.html

\bibitem{umap}
McInnes, L., Healy, J., Melville, J.:
UMAP: Uniform manifold approximation and projection for dimension reduction.
arXiv:1802.03426 (2018)

\bibitem{isomap}
Tenenbaum, J.B., de Silva, V., Langford, J.C.:
A global geometric framework for nonlinear dimensionality reduction.
Science \textbf{290}(5500), 2319--2323 (2000).
https://doi.org/10.1126/science.290.5500.2319

\bibitem{berger2017survey}
Berger, M., Tagliasacchi, A., Seversky, L.M., Alliez, P., Guennebaud, G., Levine, J.A., Sharf, A., Silva, C.T.:
A survey of surface reconstruction from point clouds.
Computer Graphics Forum \textbf{36}(1), 301--329 (2017).
https://doi.org/10.1111/cgf.12802

\bibitem{heNiyogi2004lpp}
He, X., Niyogi, P.:
Locality Preserving Projections.
In: \emph{Advances in Neural Information Processing Systems} 16 (NeurIPS 2003), pp. 153--160 (2004)

\bibitem{he2005npe}
He, X., Cai, D., Yan, S., Zhang, H.-J.:
Neighborhood Preserving Embedding.
In: Proc. IEEE International Conference on Computer Vision (ICCV), pp. 1208--1213 (2005).
https://doi.org/10.1109/ICCV.2005.167

\bibitem{cai2006olpp}
Cai, D., He, X., Han, J., Zhang, H.-J.:
Orthogonal Laplacianfaces for face recognition.
IEEE Transactions on Image Processing \textbf{15}(11), 3608--3614 (2006).
https://doi.org/10.1109/TIP.2006.881945

\bibitem{halko2011finding}
Halko, N., Martinsson, P.-G., Tropp, J.A.:
Finding structure with randomness: probabilistic algorithms for constructing approximate matrix decompositions.
SIAM Review \textbf{53}(2), 217--288 (2011).
https://doi.org/10.1137/090771806

\bibitem{Penrose2003}
Penrose, M.:
\emph{Random Geometric Graphs}.
Oxford University Press (2003)

\bibitem{davisKahan1970}
Davis, C., Kahan, W.M.:
The rotation of eigenvectors by a perturbation. III.
SIAM Journal on Numerical Analysis \textbf{7}(1), 1--46 (1970).
https://doi.org/10.1137/0707001

\end{thebibliography}
\end{document}